\newcommand{\N}{\mathbb{N}}
\newcommand{\R}{\mathbb{R}}
\newcommand{\ind}{\mathbb{I}}
\newcommand{\E}{\mathbb{E}}
\newcommand{\KL}{\mathrm{KL}}
\newcommand{\kl}{\mathrm{kl}}
\renewcommand{\P}{\mathbb{P}}
\renewcommand{\geq}{\geqslant}
\renewcommand{\leq}{\leqslant}
\renewcommand{\d}{\mathrm{d}}
\newcommand{\hmu}{\widehat{\mu}}
\newcommand{\astar}{a^{*}}
\newcommand{\mustar}{\mu^\star}
\newcommand{\Bpp}{\text{B}^{++}}
\newcommand{\Bp}{\text{B}^{+}}
\renewcommand{\phi}{\varphi}
\DeclareMathOperator*{\argmax}{arg\,max}
\DeclareMathOperator*{\loglog}{\log\!\log}
\DeclareMathOperator*{\logp}{\log_+}
\title[kl-UC$\Bpp$ algorithm]{A minimax and asymptotically optimal algorithm for stochastic bandits}
\author{\Name{Pierre M{\'e}nard} \Email{pierre.menard@math.univ-toulouse.fr}\AND
   \Name{Aur{\'e}lien Garivier} \Email{aurelien.garivier@math.univ-toulouse.fr}\\
   \addr Institut de Math\'ematiques de Toulouse; UMR5219\\
	Universit\'e de Toulouse; CNRS\\
	UPS IMT, F-31062 Toulouse Cedex 9, France }
\begin{document}
	
	\maketitle
	
	\begin{abstract}
		We propose the kl-UC$\Bpp$ algorithm for regret minimization in stochastic bandit models with exponential families of distributions.
		We prove that it is simultaneously asymptotically optimal (in the sense of Lai and Robbins' lower bound) and minimax optimal. 
		This is the first algorithm proved to enjoy these two properties at the same time.  
		This work thus merges two different lines of research with simple and clear proofs.
	\end{abstract}
	
	\begin{keywords}
		Stochastic multi-armed bandits, regret analysis, upper confidence bound (UCB), minimax optimality, asymptotic optimality.
	\end{keywords}
	\section{Introduction}
	For regret minimization in stochastic bandit problems, two notions of time-optimality coexist.
	On the one hand, one may consider a fixed model: the famous lower bound by~\citet{lai1985asymptotically} showed that the regret of any consistent strategy should grow at least as $C(\mu)\log(T)\big(1-o(1)\big)$ when the horizon $T$ goes to infinity. Here, $C(\mu)$ is a constant depending solely on the model. 
	A strategy	with a regret upper-bounded by $C(\mu)\log(T)\big(1+o(1)\big)$ will be called in this paper \emph{asymptotically-optimal}.
	Lai and Robbins provided a first example of such a strategy in their seminal work. 
	Later, \citet{garivier2011kl} and \cite{MaillardMunosStoltz11klucb} provided finite-time analysis for variants of the UCB algorithm (see~\citet{Agrawal95,burnetas1996optimal,auer2002finite})  which imply asymptotic optimality. 
	Since then, other algorithms like Bayes-UCB~\citep{kaufmann2012bayesian} and Thompson Sampling~\citep{korda2013thompson} have also joined the family.
	
	On the other hand, for a fixed horizon $T$ one may assess the quality of a strategy by the greatest regret suffered in all possible bandit models. If the regret of a bandit strategy is upper-bounded by $C'\sqrt{KT}$ (the optimal rate: see~\citet{auer2002nonstochastic} and \citet{cesa2006prediction}) for some numeric constant $C'$, this strategy is called \emph{minimax-optimal}. The PolyINF and the MOSS strategies by~\cite{audibert2009minimax} were the first proved to be minimax-optimal.
	
	Hitherto, as far as we know, no algorithm was proved to be \emph{at the same time} asymptotically- and minimax-optimal. Two limited exceptions may be mentioned: the case of two Gaussian arms is treated in~\cite{garivier2016onexplore}; and the OC-UCB algorithm of~\citet{lattimore2015optimally} is proved to be minimax-optimal and almost problem-dependent optimal for Gaussian multi-armed bandit problems.
	Notably, the OC-UCB algorithm satisfies another worthwhile property of \emph{finite-time instance near-optimality}, see Section~2 of~\citet{lattimore2015optimally} for a detailed discussion.
	
	\paragraph{Contributions.}
	
	In this work, we put forward the kl-UC$\Bpp$ algorithm, a slightly modified version of kl-UC$\Bp$ algorithm discussed in~\cite{garivier2016onexplore} as an empirical improvement of UCB, and analyzed in~\citet{kaufmann2016bayesian}.  This bandit strategy is designed for some exponential distribution families, including for example Bernoulli and Gaussian laws. It borrows from the MOSS algorithm of~\cite{audibert2009minimax} the idea to divide the horizon by the number of arms in order to reach minimax optimality. We prove that it is at the same time asymptotically- and minimax-optimal. 
	This work thus merges the progress which has been made in different directions towards the understanding of the optimism principle, finally reconciling the two notions of time-optimality. 
	
	Insofar, our contribution answers a very simple and natural question. The need for simultaneous minimax- and problem-dependent optimality could only be addressed in very limited settings by means that could not be generalized to the framework adopted in our paper. Indeed, for a given horizon $T$, the worst problem depends on $T$: it involves arms separated by a gap of order $\sqrt{K/T}$. Treating the $T$-dependent problems correctly for all $T$ appears as a quite different task than catching the optimal, problem-dependent speed of convergence for every fixed bandit model. We show in this paper that the two goals can indeed be achieved simultaneously.
	
	
	Combining the two notions of optimality requires a modified exploration rate. We stick as much as possible to existing algorithms and methods, introducing just what is necessary to obtain the desired results. Starting from that of kl-UCB (so as to have a tight asymptotic analysis), one has to completely cancel the exploration bonus of the arms that have been drawn roughly $T/K$ times. The consequence is very slight and harmless in the case where the best arm is much better than the others, but essential in order to minimize the regret in the worst case where the best arm is barely distinguishable from the others. Indeed, when the best arm is separated by a gap of order $\sqrt{K/T}$ from the suboptimal arms, we can not afford to draw more than $T/K$ times a suboptimal arm so as to get a regret of order $\sqrt{KT}$.	
	
	We present a general yet simple proof, combining the best elements of the above-cited sources which are simplified as much as possible and presented in a unified way. To this end, we develop new deviation inequalities, improving the analysis of the different terms contributing to the regret. This analysis is made in the framework which we believe is the best compromise between simplicity and generality (simple exponential families). This permits us to treat, among others, the Bernoulli and the Gaussian case at the same time. More fundamentally, this appears to us as the right, simple framework for the analysis, which emphasizes what is really required to have simple lower- and upper-bounds (the possibility to make adequate changes of measure, and Chernoff-type deviation bounds).
	
%
%
	The paper is organized as follows. In Section~\ref{sec:settings}, we introduce the setting and assumptions required for the main results, Theorems~\ref{th:minimax_bound} and~\ref{th:asymptotic_bound}, which are presented in Section~\ref{sec:klucb_plus_plus_algo}. We give the entire proofs of these results in Sections~\ref{sec:proof:minimax} and~\ref{sec:proof:asymptotic}, with only a few technical lemmas proved in Appendix~\ref{app:technical}. We conclude in Section~\ref{sec:conclusion} with some brief references to possible future prospects.
	
	\section{Notation and Setting}\label{sec:settings}
	\paragraph{Exponential families.}
	We consider a simple stochastic bandit problem with $K$ arms indexed by $a \in \{1,\ldots,K\}$, with $K\geq 2$.
	Each arm is assumed to be a probability distribution of some canonical one-dimensional exponential family $\nu_{\theta}$ indexed by $\theta\in\Theta$. The probability law $\nu_\theta$ is assumed to be  absolutely continuous with respect to a dominating measure $\rho$ on $\R$, with a density given by
	\begin{equation*}
	\frac{\d \nu_{\theta}}{\d \rho}(x)=\exp(x \theta -b(\theta)), \text{\quad where } b(\theta)= \log \int_{\R} e^{x\theta}\d\rho(x)\,\text{ and }
	\Theta=\big\{\theta \in \R:\ b(\theta)<+\infty\big\}\,.
	\end{equation*}
	It is well-known that $b$ is convex, twice differentiable on $\Theta$, that $b'(\theta)=E(\nu_\theta)$ and $b''(\theta)=V(\nu_\theta)>0$ are respectively the mean and the variance of the distribution $\nu_\theta$.
	The family can thus be parametrized by the mean $\mu=b'(\theta)$, for $\mu\in I=b'(\Theta):=(\bar{\mu}^-,\bar{\mu}^+)$. The Kullback-Leibler divergence between two distributions is
	$   \KL(\nu_\theta,\nu_{\theta'})=b(\theta')-b(\theta)-b'(\theta)(\theta'-\theta)$. 
	This permits to define the following divergence on the set of arm expectations: for $\mu=E(\nu_\theta)$ and $\mu'=E(\nu_{\theta'})$, we write
	\begin{equation*}
	\kl(\mu,\mu'):=\KL(\nu_\theta,\nu_{\theta'})\;.
	\end{equation*}
	For a minimax analysis, we need to restrict the set of means to bounded interval: we suppose that each arm  $\nu_\theta$ satisfies $\mu=b'(\theta)\in [\mu-,\mu+]\subset I$ for two fixed real numbers $\mu^+,\mu^-$. 
	Our analysis requires a Pinsker-like inequality;  we therefore assume that the variance is bounded in the exponential family: there exists $V>0$ such that
	\[
	\sup_{\mu\in I} b''\big( {b'}^{-1}(\mu)\big)=\sup_{\mu \in I} V\big(\nu_{{b'}^{-1}(\mu)}\big)\leq V<+\infty\;.
	\]
	This implies that for all $\mu,\mu'\in I$,
	\begin{equation}
	\label{eq:pinsker}
	\kl(\mu,\mu')\geq \frac{1}{2 V}(\mu-\mu')^2\,.
	\end{equation}
	In the sequel, we denote by $\mathcal{F}$ the set of bandit problems $\nu$ satisfying these assumptions.
	By the usual Pinsker inequality, this setting includes in particular Bernoulli bandits with $V=1/4$ and $\kl(\mu,\mu')=\mu\log(\mu/\mu')+(1-\mu)\log\big((1-\mu)/(1-\mu')\big)$ (by convention, $0\log 0=0\log 0/0=0$).
	This also includes (bounded) Gaussian bandits with known variance $\sigma^2$, with the choice $V=\sigma^2$ and $\kl(\mu,\mu')=(\mu-\mu')^2/(2\sigma^2)$.
	
	\paragraph{Regret.}
	The $K$ arms are denoted $\nu_{\theta_1},\dots\nu_{\theta_K}$, and the expectation of arm $a\in\{1,\dots,K\}$ is denoted by $\mu_a$.
	At each round $1\leq t\leq T$, the player pulls an arm $A_t$ and receives an independent draw $Y_t$ of the distribution $\nu_{\theta_{A_t}}$. This reward is the only piece of information
	available to the player.
	The best mean is $\mu^\star = \max_{a=1,\ldots,K} \mu_a$.
	We denote by $N_a(T) = \sum_{t=1}^T \ind_{\{ A_t = a \}}$ the number of draws of arm $a$ up to and including time $T$.
	In this work, the goal is to minimize the \emph{expected regret}
	\[
	R_T = T\mu^\star - \E\!\left[ \sum_{t=1}^T Y_t \right]
	= \E\!\left[ \sum_{t=1}^T \bigl( \mu^\star - \mu_{A_t} \bigr) \right]
	= \sum_{a=1}^K \big(\mustar -\mu_a\big) \, \E  \bigl[ N_a(T) \bigr]\,.
	\]
	\cite{lai1985asymptotically} proved that if a strategy is uniformly efficient, that is if it is such that under any bandit model of a sufficiently rich family (such as an exponential family described above) $R_T=o(T^\alpha)$ holds for every $\alpha>0$, then it needs to draw any suboptimal arm $a$ at least as often as
	\[\E\big[N_a(T)\big] \geq \frac{\log(T)}{\kl(\mu_a,\mustar)}\;\big(1-o(1)\big)\;.\]
	In light of the previous equality, this directly implies an asymptotic lower bound on $R_T/\log(T)$.
	
	On the other side, a straightforward adaptation of the the proof of Theorem A.2 of \citet{auer2002nonstochastic} shows that there exists a constant $C'$ depending only on 
	the considered family $\mathcal{F}$ of distributions such that 
	\begin{equation*}
	\sup_{\nu\in \mathcal{F}} R_T\geq C'\min\big(\sqrt{K T },T\big)\,,
	\end{equation*}
	where the supremum is taken over all bandit problems $\nu$ in $\mathcal{F}$. Note that the notion of minimax-optimality is defined here up to a multiplicative constant, in contrast to the definition of (problem-dependent) asymptotic optimality. 
	For a discussion on the minimax and asymptotic lower bounds, we refer to \citet{garivier2016explore} and references therein.
	
	\section{The kl-UC$\Bpp$ Algorithm}
	\label{sec:klucb_plus_plus_algo}
	We denote by $\hmu_{a,n}$ the empirical mean of the first $n$ rewards from arm $a$. 
	The empirical mean of arm $a$ after $t$ rounds is 
	\begin{equation*}
	\hmu_a(t)=\hmu_{a,N_a(t)}=\frac{1}{N_a(t)}\sum_{s=1}^{t} Y_s\, \ind_{\{A_s=a\}}\,.
	\end{equation*}
	\begin{center} \fbox{\begin{minipage}{12cm}
				\begin{algorithm}[H]
					\label{alg:KLUCB_plus_plus}
					\SetKwInput{para}{Parameters}
					\para{The horizon $T$ and an  exploration function $g: \N\mapsto\R^+$.}
					\SetKwInput{init}{Initialization}
					\init{Pull each arm of $\{1,..,K\}$ once.}
					\smallskip
					\textbf{For} $t= K$ to $T-1$, \textbf{do}
					\smallskip
					\begin{enumerate}
						\item Compute for each arm $a$ the quantity 
						\begin{equation}
						U_a(t)=\sup \Bigg\{ \mu\in I\ :\ \kl\big(\hmu_a(t),\mu\big)\leq \frac{g\big(N_a(t)\big)}{N_a(t)}\Bigg\}\,.
						\end{equation}
						\item Play $A_{t+1}\in \argmax_{a\in \{1,..,K\}} \;U_a(t)$.
					\end{enumerate}
				\end{algorithm}
		\end{minipage}}
	\end{center}
	
	The kl-UC$\Bpp$ algorithm  is a slight modification of algorithm kl-UC$\Bp$ of \citet{garivier2011kl} and of the kl-UCB-$\text{H}^+$ analyzed in \citet{kaufmann2016bayesian}. It uses the exploration function $g$ given by  
	\begin{equation}
	\label{eq:def_g}
	g(n)=\logp\!\!\Bigg( \frac{T}{K n}\Bigg(\log_{+}^2\!\!\Bigg(\frac{T}{K n}\Bigg)+1\Bigg)\Bigg)\,,
	\end{equation}
	where $\logp(x):= \max\big(\log(x), 0\big)$. The exploration function $g$ borrows the general form with the extra exploration rate from the kl-UCB algorithm, the division by the number of draws from kl-UC$\Bp$, and the division by the number of arm from MOSS.   	
	
	\noindent The following results state that the kl-UC$\Bpp$ algorithm is simultaneously minimax- and {a\-symp\-to\-ti\-cally}-optimal.
	\begin{theorem}[Minimax optimality]
		\label{th:minimax_bound}
		For any family $\mathcal{F}$ satisfying the assumptions detailed in Section~\ref{sec:settings}, and for any bandit model $\nu\in\mathcal{F}$, the expected regret of the kl-UC$\Bpp$ algorithm is upper-bounded as
		\begin{equation}
		\label{eq:minimax_bound}
		R_T\leq 76 \sqrt{ V K T}+(\mu^+-\mu^-)K\,.
		\end{equation}
	\end{theorem}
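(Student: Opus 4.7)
The plan is a MOSS-style regret decomposition. Using the optimism rule $U_{A_t}(t-1)\geq U_{a^\star}(t-1)$, write the per-round regret as
\begin{equation*}
\mustar-\mu_{A_t}\leq\bigl(\mustar-U_{a^\star}(t-1)\bigr)_+ + \bigl(U_{A_t}(t-1)-\mu_{A_t}\bigr)_+\,,
\end{equation*}
and sum over $t>K$, the initial $K$ round-robin rounds contributing the additive $(\mu^+-\mu^-)K$. This yields $R_T\leq A_T+B_T+(\mu^+-\mu^-)K$, where $A_T$ is the expected cumulative under-estimation of the optimal arm's index and $B_T$ is the expected cumulative overshoot of the indices of the arms that are played.

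To control $B_T$, regroup by arm: $B_T\leq\sum_a\E\sum_{n=0}^{N_a(T)-1}(U_{a,n}-\mu_a)_+$, with $U_{a,n}$ the index computed from the first $n$ draws of $a$. Applying the Pinsker-type inequality (\ref{eq:pinsker}) at the defining equality $\kl(\hmu_{a,n},U_{a,n})=g(n)/n$ gives $U_{a,n}-\hmu_{a,n}\leq\sqrt{2Vg(n)/n}$, hence $(U_{a,n}-\mu_a)_+\leq(\hmu_{a,n}-\mu_a)_++\sqrt{2Vg(n)/n}$. The crucial observation is that $g(n)=0$ as soon as $n\geq T/K$, because both nested $\log_+$ in (\ref{eq:def_g}) vanish there; so the exploration contribution is a sum over $n\leq T/K$ of order $\sqrt{VT/K}$ per arm, and therefore $\O(\sqrt{VKT})$ in total. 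The empirical-noise piece $\sum_a\E\sum_{n=0}^{N_a(T)-1}(\hmu_{a,n}-\mu_a)_+$ is bounded in expectation by $\O(\sqrt{V\,\E N_a(T)})$ per arm using $\E|\hmu_{a,n}-\mu_a|\leq\sqrt{V/n}$ at each $n$, and Cauchy--Schwarz in the form $\sum_a\sqrt{\E N_a(T)}\leq\sqrt{KT}$ delivers the final $\O(\sqrt{VKT})$.

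The bound on $A_T$ is the more delicate step. Integrating tails yields $A_T\leq T\int_0^{+\infty}\P\bigl(\mustar-U_{a^\star}(t-1)>\delta\bigr)\d\delta$; the event $\{U_{a^\star,n}<\mustar-\delta\}$ forces both $\hmu_{a^\star,n}<\mustar-\delta$ and $n\,\kl(\hmu_{a^\star,n},\mustar-\delta)>g(n)$. A Chernoff-style deviation on $\hmu_{a^\star,n}$ combined with the explicit form of $g$ makes the sum $\sum_{n\leq T}\P(U_{a^\star,n}<\mustar-\delta)$ of order $(K/T)$ times a sub-Gaussian tail in $\delta$; the extra $\log_+^2$ factor inside $g$ is exactly what is needed to keep the sum over $n$ bounded. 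Integrating against $\d\delta$ then gives $A_T=\O(\sqrt{VKT})$.

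The main technical obstacle is precisely this uniform-in-$n$ deviation inequality: one needs a sharp tail on $\sup_{n\leq T}\bigl[\mustar-\hmu_{a^\star,n}-\sqrt{2V g(n)/n}\bigr]$ whose $\delta$-integration produces $\sqrt{KT}$ rather than the $\sqrt{T\log T}$ overhead one would get from a naive peeling. The precise form of $g$, with its $\log_+^2$ correction, is engineered exactly for this summability; propagating the constants through both pieces of the decomposition is what yields the explicit factor $76$.
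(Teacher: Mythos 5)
Your decomposition $R_T\leq A_T+B_T+(\mu^+-\mu^-)K$ is exactly the paper's, and your treatment of $A_T$ and of the exploration part of $B_T$ is consistent with the paper's proof: you correctly note that $g(n)=0$ for $n\geq T/K$, so that $\sum_{n\leq T/K}\sqrt{2Vg(n)/n}=\O(\sqrt{VT/K})$ per arm, and you correctly flag the peeling--deviation argument needed for $A_T$ together with the role of the $\log_+^2$ correction in $g$.

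There is, however, a genuine gap in the empirical-noise piece of $B_T$. You claim
\begin{equation*}
\E\Bigg[\sum_{n=0}^{N_a(T)-1}\big(\hmu_{a,n}-\mu_a\big)_+\Bigg]\ \leq\ \O\Big(\sqrt{V\,\E N_a(T)}\Big)\quad\text{per arm,}
\end{equation*}
justified by $\E|\hmu_{a,n}-\mu_a|\leq\sqrt{V/n}$ "at each $n$." But $N_a(T)$ is a random quantity that depends on the realizations $\hmu_{a,1},\hmu_{a,2},\dots$; it is not a stopping time for the filtration generated by arm $a$'s rewards alone, and it is \emph{positively} correlated with the $\hmu_{a,n}$ because an optimistic index rule plays more often the arms whose empirical means happen to be high. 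Consequently
\begin{equation*}
\E\Bigg[\sum_{n=1}^{N_a(T)}\big(\hmu_{a,n}-\mu_a\big)_+\Bigg]
=\sum_{n=1}^{T}\E\Big[\big(\hmu_{a,n}-\mu_a\big)_+\ind_{\{N_a(T)\geq n\}}\Big]
\end{equation*}
cannot be factored term-wise; applying Cauchy--Schwarz at each $n$ and again over $n$ produces an extra $\sqrt{\log T}$ factor, which would degrade the final bound to $\O(\sqrt{VKT\log T})$ and destroy minimax optimality — precisely the overhead the theorem is designed to remove. (Simply replacing $N_a(T)$ by $T$ gives $\O(\sqrt{VT})$ per arm, i.e.\ $\O(K\sqrt{VT})$ total, which is off by a factor $\sqrt{K}$.)

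The paper avoids this entirely. Rather than carry the random range $n<N_a(T)$ through the analysis, it "gets rid of the randomness of $N_{A_{t+1}}(t)$" with the pessimistic trajectorial bound
$\sum_{t=K}^{T-1}\ind_{\{U_{A_{t+1}}(t)-\mu_{A_{t+1}}\geq u\}}\leq\sum_{a=1}^{K}\sum_{n=1}^{T}\ind_{\{U_{a,n}-\mu_a\geq u\}}$
(Equation after \eqref{eq:bound_B_int}), and then for each arm integrates the tail over $u\geq\delta_0=\sqrt{22VK/T}$, cutting the sum over $n$ at the critical sample size $n(u)=\lceil(8V/u^2)\log(Tu^2/(8VK))\rceil$ past which the exploration bonus is dominated by $u$. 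The integral $\int_{\delta_0}^\infty n(u)\,\d u$ is $\O(\sqrt{VT/K})$ \emph{because} the lower limit is $\delta_0\sim\sqrt{VK/T}$, and the Hoeffding tail for $n\geq n(u)$ is handled by $\sum_{n\geq n(u)}e^{-c^2u^2n/(2V)}\leq 2V/(c^2u^2)$, which also integrates to $\O(\sqrt{VT/K})$. It is the joint treatment of the exploration bonus and the empirical deviation inside one probability (rather than splitting them additively and trying to control the random sum of the noise alone) that delivers the $\sqrt{VKT}$ rate with explicit constants. You would either need to reproduce a similar tail-integration-with-critical-cut argument, or establish the stopping-time-type claim above by a genuinely different (and nontrivial) argument; as written, the step does not hold.
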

	
	\begin{theorem}[Asymptotic optimality]
		\label{th:asymptotic_bound}
		For any bandit model $\nu\in\mathcal{F}$, for any suboptimal arm $a$ and any $\delta$ such that $\sqrt{22 V K / T}\leq \delta\leq(\mustar-\mu_a)/3$,
		\begin{equation}
		\label{eq:asymptotic_bound}
		\E\big[N_a(T)\big]\leq \frac{\log(T)}{\kl(\mu_a+\delta,\mu^\star-\delta)}+O\left(\frac{\loglog(T)}{\delta^2}\right)\,
		\end{equation}
		which implies the asymptotic optimality (see the end of the proof in Section~\ref{sec:proof:asymptotic} for an explicit bound).
	\end{theorem}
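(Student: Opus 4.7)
The plan is to adapt the classical kl-UCB analysis of \citet{garivier2011kl}, exploiting the specific shape of the exploration function $g$ in order to replace the second-order $\log(T)$ term by a $\loglog(T)$ one. Denoting by $\astar$ an optimal arm, start from $\E[N_a(T)] \leq 1 + \sum_{t=K}^{T-1} \P(A_{t+1} = a)$, and split each summand via
\begin{equation*}
\P(A_{t+1} = a) \leq \P\bigl(U_{\astar}(t) < \mustar - \delta\bigr) + \P\bigl(A_{t+1} = a,\ U_a(t) \geq \mustar - \delta\bigr).
\end{equation*}

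I would handle the second sum by further splitting on the event $\{\hmu_a(t) > \mu_a + \delta\}$. On its complement, $U_a(t) \geq \mustar - \delta$ and $\hmu_a(t) \leq \mu_a + \delta$ force, by monotonicity of $\kl$ in its second argument, the \emph{deterministic} constraint
\begin{equation*}
N_a(t)\,\kl(\mu_a + \delta,\, \mustar - \delta) \leq g\bigl(N_a(t)\bigr).
\end{equation*}
Since the definition of $g$ gives $g(n) = \log\bigl(T/(Kn)\bigr) + O\bigl(\loglog(T/(Kn))\bigr)$ for $n \leq T/K$, solving this implicit inequality and using Pinsker's inequality~\eqref{eq:pinsker} to lower bound $\kl$ by $\delta^2/(2V)$ inside the error terms yields a bound of the claimed form $\log(T)/\kl(\mu_a+\delta,\mustar-\delta) + O(\loglog(T)/\delta^2)$. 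The contribution of the event $\{\hmu_a(t) > \mu_a + \delta,\ A_{t+1}=a\}$ is bounded, by a union over $n$ and Chernoff together with \eqref{eq:pinsker}, by $\sum_{n\geq 1} \exp\bigl(-n\kl(\mu_a + \delta,\mu_a)\bigr) \leq \sum_{n\geq 1} \exp\bigl(-n\delta^2/(2V)\bigr) = O(V/\delta^2)$.

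The main obstacle is bounding the first sum $\sum_t \P(U_{\astar}(t) < \mustar - \delta)$. Since the event $\{U_{\astar}(t) < \mustar - \delta\}$ entails $\hmu_{\astar}(t) < \mustar - \delta$ and $N_{\astar}(t)\,\kl(\hmu_{\astar}(t), \mustar - \delta) > g(N_{\astar}(t))$, the problem reduces to controlling
\begin{equation*}
\sum_{n=1}^{T} \P\Bigl(\hmu_{\astar,n} < \mustar - \delta,\ n\,\kl(\hmu_{\astar,n},\, \mustar - \delta) \geq g(n)\Bigr).
\end{equation*}
I would treat this with a peeling argument over geometric slices of $n$, combined with a Chernoff bound for the lower kl-tail on each slice. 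The crucial feature, also exploited in \citet{kaufmann2016bayesian}, is that the $\logp^2$ boost inside $g$ makes the peeled series telescope into an $O(\loglog(T)/\delta^2)$ term rather than the naive $O(\log(T)/\delta^2)$; the restriction $\delta \geq \sqrt{22VK/T}$ is precisely what is needed so that, already on the coarsest slice (where $n$ is of order $T/K$), the Chernoff exponent $n\delta^2/(2V)$ dominates $g(n)$ so that the peeling can get started. Summing the three contributions then yields \eqref{eq:asymptotic_bound}.

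Asymptotic optimality follows from the bound just proved by picking, for each $T$, $\delta_T \downarrow 0$ slowly enough that $\kl(\mu_a + \delta_T, \mustar - \delta_T) \to \kl(\mu_a, \mustar)$ while $\loglog(T)/\delta_T^2 = o(\log(T))$ (e.g. $\delta_T = 1/\loglog(T)$), so that $\E[N_a(T)] \leq \log(T)/\kl(\mu_a, \mustar)\,(1 + o(1))$, which matches the Lai--Robbins lower bound.
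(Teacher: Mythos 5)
Your plan reproduces the paper's proof: the same decomposition of $\E[N_a(T)]$ into a term controlling the under-estimation of $\mustar$ by $U_{\astar}$ (handled by the same peeling argument, reusing the bound from the minimax proof and yielding in fact an $O(VK/\delta^2)$ term, constant in $T$ and hence slightly sharper than your stated $O(\loglog(T)/\delta^2)$) and a term controlling the over-estimation by $U_a$ (the paper avoids your implicit-inequality solve simply by bounding $g(n)\leq g(1)=\log((T/K)(1+\log^2(T/K)))$ and cutting the sum at $n(\delta)=\lceil g(1)/\kl(\mu_a+\delta,\mustar-\delta)\rceil$, which gives the same $\log(T)/\kl + O(\loglog(T)/\delta^2)$ bound). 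Your concluding choice $\delta_T\to 0$ with $\loglog(T)/\delta_T^2=o(\log T)$ matches the paper's.
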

	Theorems~\ref{th:minimax_bound} and~\ref{th:asymptotic_bound} are proved in Sections~\ref{sec:proof:minimax} and~\ref{sec:proof:asymptotic} respectively. 
	The main differences between the two proofs are discussed at the beginning of Section~\ref{sec:proof:asymptotic}.			
	Note that the two regret bounds of Theorems~\ref{th:minimax_bound} and~\ref{th:asymptotic_bound} also apply to all $[0,1]$-valued bandit models, with the value $V=1/4$, as the deviations of $[0,1]$-valued random variables are dominated by those of a Bernoulli distribution with the same mean (this is discussed for example in~\cite{cappe2013kullback}). However, the kl-UC$\Bpp$ algorithm is not asymptotically optimal then: the regret bound in $\log(T)/\kl(\mu_a,\mu^*)$ is not optimal in that case. Asymptotic optimality would require tight distribution-dependent, non-parametric upper confidence bounds (for example based on the empirical-likelihood method, as in the above cited paper). This is out of the scope of this work (and would require a lot more space).
	
	\section{Proof of Theorem~\ref{th:minimax_bound}}
	\label{sec:proof:minimax}
	This proof merges merges ideas presented in~\citet{bubeck2013prior} for the analysis of the MOSS algorithm and from the analysis of kl-UCB in~\citet{cappe2013kullback} (see also~\citet{kaufmann2016bayesian}). It is divided into the following steps:
	
	\paragraph{Decomposition of the regret.}
	Let $\astar$ be the index of an optimal arm. Since by definition of the strategy $U_{\astar}(t)\leq U_{A_{t+1}}(t)$ for all $t\geq K-1$, the regret can be decomposed as follows:
	\begin{equation}
	\label{eq:decomp_regret}
	R_T\leq K (\mu^+-\mu^-) + \underbrace{\sum_{t=K}^{T-1} \E\big[\mustar-U_{\astar}(t)\big]}_{A}+\underbrace{\sum_{t=K}^{T-1} \E\big[U_{A_{t+1}}(t)-\mu_{A_{t+1}}\big]}_{B}\,.
	\end{equation}
	We define $\delta_0=\sqrt{22 V K /T}$; since the bound \eqref{eq:minimax_bound} is otherwise trivial, we assume in the sequel that $\delta_0\leq 1$. For the first term $A$, as in the proof of MOSS algorithm, we carefully upper bound the probability that appears inside the integral thanks to a 'peeling trick'. The second term B is easier to handle since we can reduce the index to UCB-like-index thanks to the Pinsker inequality~\eqref{eq:pinsker} and proceed as in~\citet{bubeck2013prior}.

	\paragraph{Step 1: Upper-bounding $A$.}
	Term $A$ is concerned with the optimal arm $\astar$ only. Two words of intuition: since $U_{\astar}(t)$ is meant to be an upper confidence bound for $\mustar$, this term should not be too large, at least as long as the the confidence level controlled by function $g$ is large enough -- but when the confidence level is low, the number of draws is large and deviations are unlikely.
	
	Upper-bounding term $A$ boils down to controlling the probability that $\mustar$ is under-estimated at time $t$. Indeed,
	\begin{align}
	\E\big[\mustar-U_{\astar}(t)\big]\leq \E\Big[\big(\mustar-U_{\astar}(t)\big)_+\Big] & \leq \int_{0}^{+\infty} \P\big(u < \mustar-U_{\astar}(t)\big)\d u \nonumber\\
	&\leq \delta_0 +\int_{\delta_0}^{+\infty} \P\big(U_{\astar}(t)\leq \mustar-u\big)\d u\,,
	\label{eq:bound_A_int}
	\end{align}
	and we need to upper bound the left-deviations of the mean of arm $\astar$.
	On the event $\{U_{\astar}(t)\leq \mustar-u\}$, we have that $\hmu_{\astar}(t)\leq U_{\astar}(t)\leq \mustar-u<\mustar$, and by definition of $U_{\astar}(t)$ it holds that
	\begin{equation*}
	\kl\big(\hmu_{\astar}(t),\mustar\big)\geq \frac{g\big(N_{\astar}(t)\big)}{N_{\astar}(t)}\,.
	\end{equation*}
	Consequently,
	\begin{align}
	\P\big(U_{\astar}(t)\leq \mustar-u\big)&\leq \P\Big(\hmu_{\astar}(t)\leq \mustar-u\ \text{ and }\ \kl\big(\hmu_{\astar}(t),\mustar\big)\geq g\big(N_{\astar}(t)\big)/N_{\astar}(t) \Big) \nonumber\\
	&\leq \P\big(\exists 1\leq n\leq T,\ \  \hmu_{\astar,n}\leq \mustar-u\ \text{ and }\ \kl(\hmu_{\astar,n},\mustar)\geq g(n)/n \big)\,. \label{eq:proba_in_A}
	\end{align}
	For small values of $n$, the dominant term is given by $\kl(\hmu_{\astar,n},\mustar)\geq g(n)/n$, whereas for large $n$ the event $\hmu_{\astar,n}\leq \mustar-u$ is quite unlikely. This is why we split the probability in two terms, proceeding as follows. Let $f$ be the function defined, for $u\geq \delta_0$, by
	\begin{equation*}
	f(u)=\frac{2V}{u^2}\log\!\!\left(\frac{T u^2}{2 V K}\right)\,.
	\end{equation*}
	Our choice of $\delta_0$ implies that $f(u)K/T\leq \exp(-3/2)$, and thus
	\begin{equation}
	\label{eq:bound_on_f}
	f(u)< \frac{T}{K} \qquad\text{ and }\qquad\log\!\!\left(\frac{T}{K f(u)}\right)\geq 3/2\,.
	\end{equation}
	In particular, for $n\leq f(u)$ it holds that
	\begin{equation*}
	g(n)= \log\!\!\left(\frac{T}{K n} \left( 1+\log^2\!\!\left(\frac{T}{K n}\right)\right)\right)\,.
	\end{equation*}
	It appears that $f(u)$ is the right place where to split the probability of Equation~\eqref{eq:proba_in_A}: defining $\kl_+(p,q):=\kl(p,q)\ind_{\{p\leq q\}}$, we write
	\begin{align}
	\P\big(\exists 1 &\leq n\leq T,\ \  \hmu_{\astar,n}\leq \mustar-u\ \text{ and }\ \kl(\hmu_{\astar,n},\mustar)\geq g(n)/n \big)\leq \nonumber\\
	&\underbrace{\P\big( \exists 1\leq n\leq f(u),\ \ \kl_+(\hmu_{\astar,n}, \mustar)\geq g(n)/n \big)}_{A_1}+\underbrace{\P\big( \exists f(u)\leq n\leq T,\ \ \hmu_{\astar,n}\leq \mustar-u\big)}_{A_2}\,.\label{eq:def_A1_A2}
	\end{align}
	Controlling terms $A_1$ and $A_2$ is a matter of deviation inequalities. \\
	\textbf{Step 1.1: Upper-bounding $A_1$. } The term $A_1$, which involves  self-normalized devation probabilities, can be upper-bounded thanks to a 'peeling trick' as in the proof of Theorem~5 from \citet{audibert2009minimax}. 
	We assume that $f(u)\geq 1$, for otherwise $A_1=0$. We use the grid $f(u)/\beta^{\ell+1}\leq n\leq f(u)/\beta^{\ell}$, where the real $\beta>1$ will be chosen later. We write 
	\begin{equation}
	A_1\leq \sum_{\ell=0}^{+\infty} \underbrace{\P\Bigg( \exists \frac{f(u)}{\beta^{\ell+1}}\leq n\leq  \frac{f(u)}{\beta^{\ell}},\ \ \kl_+(\hmu_{\astar,n},\mustar)\geq \gamma_\ell \Bigg)}_{A_1^\ell} \,,
	\label{eq:peeling_A_1}
	\end{equation}
	where
	\begin{equation*}
	\gamma_\ell=\frac{\log\!\!\Bigg(\frac{T \beta^\ell}{K f(u)}\Bigg(1+\log^2\!\!\Bigg(\frac{T}{K f(u)}\Bigg)\Bigg)}{f(u)/\beta^\ell}\,.
	\end{equation*}
	Thanks to Doob's maximal inequality (see Lemma~\ref{lem:maximal_inequality} in Appendix~\ref{app:technical}), 
	\begin{align*}
	A_1^\ell  \leq \exp\!\! \Bigg( -\frac{f(u)}{\beta^{\ell+1}}\,\gamma_\ell \Bigg)= e^{-\ell \log(\beta)/\beta -C/\beta}\,,
	\end{align*}
	where 
	\begin{equation}
	C:=\log\!\!\Bigg(\frac{T}{ K f(u)} \Bigg(1+\log^2\!\!\Bigg(\frac{T}{K f(u)}\Bigg)\Bigg)\Bigg)\,.
	\end{equation}
	Plugging this last inequality into \eqref{eq:peeling_A_1}, together with the numerical inequality of Lemma~\ref{lem:bound_max} (see Appendix~\ref{app:technical}), we get 
	\begin{multline*}
	A_1 \leq \sum_{\ell=0}^{+\infty} e^{-\ell \log(\beta)/\beta-C/\beta}
	= \frac{1}{1-e^{-\log(\beta)/\beta}}e^{-C/\beta}\\
	\leq \frac{e}{e^{\log(\beta)/\beta}-1} e^{-C/\beta}
	\leq 2e \max\big(\beta,\beta/(\beta-1)\big) e^{-C/\beta}\,.
	\end{multline*}
	But thanks to Equation~\eqref{eq:bound_on_f},
	\begin{equation*}
	C=\log\!\!\Bigg(\frac{T}{ K f(u)} \Bigg(1+\log^2\!\!\Bigg(\frac{T}{K f(u)}\Bigg)\Bigg)\Bigg)\geq \log\!\!\Bigg(\frac{T}{ K f(u)}\Bigg) \geq \frac{3}{2}\,.
	\end{equation*}
	It is now time to choose $\beta:=C/(C-1)$, so that $\beta\leq 2 C$ and $\beta/(\beta-1)=C$. Together with the definition of $f$, this choice yields
	\begin{align}
	A_1 &\leq 4 e^2 C e^{-C} 
	=4e^2 \frac{\log\!\!\Bigg(\frac{T}{ K f(u)} \Bigg(1+\log^2\!\!\Bigg(\frac{T}{K f(u)}\Bigg)\Bigg)\Bigg)}{1+\log^2\!\!\Bigg(\frac{T}{K f(u)}\Bigg)} \frac{K f(u)}{T}\,,\label{eq:inequality_lemma}
	\end{align}
	and therefore 
	\begin{equation}\label{eq:bound_A_1}
	A_1\leq 4e^2 \frac{K f(u)}{T} = \frac{16 e^2 V K}{T u^2}\log\!\!\left(\sqrt{\frac{T}{2 V K}}u\right)\,
	\end{equation}
	as, for all $x\geq 1$,
	\begin{equation*}
	\frac{\log\!\Big(x\big(1+\log^2(x)\big)\Big)}{1+\log^2(x)}\leq 1\,.
	\end{equation*}
	\textbf{Step 1.2: Upper-bounding $A_2$}. The term $A_2$ is more simple to handle, as it does not involve self-normalized deviations. Thanks to the maximal inequality (recalled in Equation~\eqref{eq:hoeffding} of Appendix~\ref{app:technical}) and thanks to the Pinsker-like inequality~\eqref{eq:pinsker}, 
	\begin{align}
	A_2 \leq e^{-u^2f(u)/2V}=\frac{2 V K}{T u^2} \label{eq:bound_A_2}\,.
	\end{align}
	Putting Equations~\eqref{eq:bound_A_int} to \eqref{eq:bound_A_2} together, 
	we obtain that
	\begin{equation}
	\E\big[\mustar-U_{\astar}(t)\big]\leq \delta_0 +\int_{\delta_0}^{+\infty} \frac{16 e^2 V K}{T u^2}\log\!\!\left(\sqrt{\frac{T}{2 V K}}u\right)+\frac{2 V K}{T u^2} \d u\,.
	\label{eq:A_with_A_1and_A_2}
	\end{equation}
	It remains only to conclude with some calculus:
	\begin{align*}
	\int_{\delta_0}^{+\infty} \frac{16 e^2 V K}{T u^2}\log\!\!\left(\sqrt{\frac{T}{2 V K}}u\right)\d u &=\left[-\frac{16 e^2 V K}{T u}\log\!\!\left(e\sqrt{\frac{T}{2 V K}}u\right)\right]_{\delta_0}^{+\infty}\\
	&= \frac{16 e^2 \sqrt{V}}{\sqrt{22}} \log\big(e\sqrt{11}\big)\sqrt{\frac{K}{T}}\,.
	\end{align*}
	Similarly,
	\begin{equation*}
	\int_{\delta_0}^{+\infty} \frac{2 V K}{T u^2} \d u= 2\sqrt{\frac{V}{22}}\sqrt{\frac{K}{T}}\,,
	\end{equation*}
	and replacing $\delta_0$ by its value we obtain from Equation~\eqref{eq:A_with_A_1and_A_2} the following relation:
	\begin{equation*}
	\E\big[\mustar-U_{\astar}(t)\big]\leq \sqrt{V}\left(\sqrt{22}+\frac{16 e^2}{\sqrt{22}} \log\big(e\sqrt{11}\big)+\frac{2}{\sqrt{22}} \right)\sqrt{\frac{K}{T}}\,.
	\end{equation*}
	Summing over $t$ from $K$ to $T-1$, this yields:
	\begin{equation}
	A\leq \sqrt{V}\left(\sqrt{22}+\frac{16 e^2}{\sqrt{22}} \log\big(e\sqrt{11}\big)+\frac{2}{\sqrt{22}} \right)\sqrt{K T}\,.
	\label{eq:bound_A}
	\end{equation}
	
	\paragraph{Step 2: Upper-bounding $B$.} 
	Term $B$ is of different nature, since typically $U_{A_{t+1}}(t)>\mu_{A_{t+1}}$.
	However, as for the term $A$, we first reduce the problem to the upper-bounding of a probability:
	\begin{align}
	B&\leq \sum_{t=K}^{T-1} \delta_0+\int_{\delta_0}^{+\infty}\P\big( U_{A_{t+1}}(t)-\mu_{A_{t+1}}\geq  u\big) \d u \nonumber\\
	&\leq  T\delta_0+ \int_{\delta_0}^{+\infty} \sum_{t=K}^{T-1} \P\big( U_{A_{t+1}}(t)-\mu_{A_{t+1}}\geq u \big)
	\d u\,.\label{eq:bound_B_int}
	\end{align}
	The event $\big\{U_{A_{t+1}}(t)-\mu_{A_{t+1}}\geq u\big\}$ is typical if $N_{A_{t+1}}(t)$ is small, and corresponds to a deviation of the sample mean otherwise.
	In order to handle this correctly, we first get rid of the randomness of $N_{A_{t+1}}(t)$ by the pessimistic trajectorial upper bound from \citet{bubeck2013prior}
	\begin{equation*}
	\sum_{t=K}^{T-1} \ind_{\big\{ U_{A_{t+1}}(t)-\mu_{A_{t+1}}\geq u\big\}}\leq \sum_{n=1}^{T}\sum_{a=1}^{K} \ind_{\big\{ U_{a,n}-\mu_{a}\geq u\big\}}\,.
	\end{equation*}
	In addition, we simplify the upper bound thanks to our assumption~\eqref{eq:pinsker} that some Pinsker type inequality is available:
	\begin{equation}
	U_{a,n}:=\sup \Bigg\{ \mu\in I\ :\ \kl\big(\hmu_{a,n},\mu\big)\leq \frac{g(n)}{n}\Bigg\}\leq B_{a,n}:= \hmu_{a,n} +\sqrt{2 V\frac{g(n)}{n}}\,.
	\end{equation}
	Hence, $B$ can be upper-bounded as
	\begin{equation}
	B\leq T\delta_0+ \sum_{a=1}^K\int_{\delta_0}^{+\infty} \sum_{n=1}^{T} \P( B_{a,n}-\mu_{a}\geq u)\d u\,.
	\label{eq:Bound_B_wtih_UCB_index}    
	\end{equation}
	Then, we need only to upper bound $\sum_{n=1}^{T} \P( B_{a,n}-\mu_{a}\geq u)$ for each arm $a\in\{1,\dots,K\}$. We cut the sum at the critical sample size $n(u)$ where the event $\big\{B_{a,n}-\mu_a>u\big\}$ becomes atypical: for $u\geq \delta_0$, let $n(u)$ be the integer such that 
	\begin{equation*}
	n(u)= \left\lceil\frac{8V}{u^2}\log\!\!\left(\frac{T u^2}{8 V K}\right)\right\rceil\,.
	\end{equation*}
	For $n\geq n(u)$ it holds that
	\begin{equation}
	\label{eq:bound_exp_term_nu}
	\sqrt{2V\frac{g(n)}{n}}\leq \frac{u}{\sqrt{2}}\,.
	\end{equation}
	Indeed, as $\log(1+x^2)\leq x$ for all $x\geq 0$, we have
	\begin{equation*}
	2V\frac{g(n)}{n}\leq \frac{4V}{n}\logp\!\!\left(\frac{T}{Kn}\right)\,,
	\end{equation*}
	also observe that $h(x):=\log\big(x/ \log(x)\big)/\log(x)$ is such that $h(x)\leq 1$ for $x\geq 11/4$, and thus for $n\geq n(u)$ and $u\geq \delta_0$
	\begin{align*}
	2V\frac{g(n)}{n}\leq \frac{4V}{n(u)}\logp\!\!\left(\frac{T}{K n(u)}\right)&\leq \frac{u^2}{2} h\left(\frac{T u^2}{8 V K}\right)
	\leq \frac{u^2}{2}\,.
	\end{align*}
	Therefore, cutting the sum in~\eqref{eq:Bound_B_wtih_UCB_index} at $n(u)$, we obtain:
	\begin{align}
	\sum_{n=1}^{T} \P( B_{a,n}-\mu_{a}\geq u)&\leq n(u)-1+\sum_{n=n(u)}^{T} \P\big( \hmu_{a,n}-\mu_{a}\geq u-\sqrt{2 V g(n)/n}\big)\nonumber\\
	&\leq n(u)-1 + \sum_{n=n(u)}^{T} \P\Big( \hmu_{a,n}-\mu_{a}\geq u\big(1-1/\sqrt{2}\big)\Big)\nonumber\\
	&\leq \frac{8V}{u^2}\log\!\!\left(\frac{T u^2}{8 V K}\right) + \sum_{n=n(u)}^{T} \P( \hmu_{a,n}-\mu_{a}\geq c u)\,, \label{eq:bound_after_cut}
	\end{align}
	where $c:=1-1/\sqrt{2}$.
	It remains to integrate Inequality~\eqref{eq:bound_after_cut} from  $u=\delta_0$ to infinity. 
	The first summand involves the same integral as we have already met in the upper bound of term $A_1$: 
	\begin{align*}
	\int_{\delta_0}^{+\infty} \frac{8V}{u^2}\log\!\!\left(\frac{T u^2}{8 V K}\right)\d u = 16 \sqrt{\frac{V}{22}} \log\!\!\left(e\sqrt{\frac{11}{4}}\right)\sqrt{\frac{T}{K}}\,.
	\end{align*}
	For the remaining summand, Inequality~\eqref{eq:hoeffding} yields  
	\begin{align*}
	\sum_{n=n(u)}^{T} \P( \hmu_{a,n}-\mu_{a}\geq c u)&\leq \sum_{n=n(u)}^{T} e^{- \frac{u^2 c^2 n}{2V}} \leq \frac{1}{e^{\frac{u^2 c^2}{2V}}-1}\,.
	\end{align*}
	Thus, as $e^{x}-1\geq x$ for all $x\geq 0$,
	\begin{align*}
	\int_{\delta_0}^{+\infty} \frac{1}{e^{\frac{u^2 c^2}{2V}}-1}\d u &\leq  \int_{\delta_0}^{+\infty} \frac{2 V}{u^2 c^2} \d u = \frac{2}{c^2}\sqrt{\frac{V}{22}} \sqrt{\frac{T}{K}}\,,
	\end{align*}
	Putting everything together starting from Inequality~\eqref{eq:bound_after_cut}, we have proved that
	\begin{equation*}
	\int_{\delta_0}^{+\infty} \sum_{n=1}^{T} \P( B_{a,n}-\mu_{a}\geq u) \d u \leq\sqrt{ \frac{V}{22}}\left(16\log\!\!\left(e\sqrt{\frac{11}{4}}\right)+\frac{2}{c^2}\right) \sqrt{\frac{T}{K}} \,.
	\end{equation*}
	By Equation~\eqref{eq:Bound_B_wtih_UCB_index}, replacing $\delta_0$ by its value finally yields
	\begin{equation}
	B\leq \sqrt{V}\Bigg(\sqrt{22}+\frac{16}{\sqrt{22}}\log\!\!\left(e\sqrt{\frac{11}{4}}\right)+\frac{2}{\sqrt{22}c^2}\Bigg) \sqrt{KT}\,.
	\label{eq:bound_B}
	\end{equation}
	\paragraph{Conclusion of the proof.}
	It just remains to plug Inequalities~\eqref{eq:bound_A} and~\eqref{eq:bound_B} into Equation \eqref{eq:decomp_regret}:
	\begin{align*}
	A+B &\leq \sqrt{V}\Bigg(2\sqrt{22}+\frac{16 e^2}{\sqrt{22}} \log\big(e\sqrt{11}\big)+\frac{2}{\sqrt{22}}+\frac{16}{\sqrt{22}}\log\!\!\left(e\sqrt{\frac{11}{4}}\right)+\frac{2}{\sqrt{22}c^2} \Bigg)\sqrt{K T}\\
	&\leq 76\sqrt{V K T}\,,
	\end{align*}
	which concludes the proof.
	
	\section{Proof of Theorem~\ref{th:asymptotic_bound}}
	\label{sec:proof:asymptotic}
	The analysis of asymptotic optimality shares many elements with the minimax analysis, with some differences however. The decomposition of the regret into two terms $A$ and $B$ is similar, but localized on a fixed sub-optimal arm $a\in\{1,\dots,K\}$: we analyze the number of draws of $a$ and not directly the regret (and we do not need to integrate the deviations at the end).		
	We proceed roughly as in the proof of Theorem~\ref{th:minimax_bound} for term $A$, which involves the deviations of an optimal arm.
	For term B, which stands for the behavior of the sub-optimal arm $a$, a different (but classical) argument is used, as one cannot simply use the Pinsker-like Inequality~\eqref{eq:pinsker} if one wants to obtain the correct constant (and thus asymptotic optimality).
	
	\paragraph{Decomposition of $\E \big[ N_a(T)\big ]$.} If arm $a$ is pulled at time $t+1$, then by definition of the strategy $U_{a^*}(t)\leq U_{a}(t)$ for any index $a^*$ of an optimal arm. Thus, for any fixed $\delta$ to be chosen later,
	\begin{align*}
	\big\{A_{t+1}=a\big\}&\subseteq\big\{ \mu^*-\delta \geq   U_{a}(t)\big\} \cup \{\mu^*-\delta < U_{a}(t) \text{ and } A_{t+1}=a\big\}\\
	&\subseteq \big\{ \mu^*-\delta \geq   U_{a^*}(t)\big\} \cup \{\mu^*-\delta < U_{a}(t) \text{ and } A_{t+1}=a\big\}\,.
	\end{align*}
	As a consequence,
	\begin{equation}
	\label{eq:decomposition_ENa}
	\E \big[ N_a(T)\big ] \leq 1+\underbrace{\sum_{t=K}^{T-1} \P\big( U_{a^*}(t)\leq \mu^*-\delta\big)}_{\text{A}}+\underbrace{\sum_{t=K}^{T-1} \P\big( \mu^*-\delta < U_{a}(t) \text{ and } A_{t+1}=a\big)}_{\text{B}}\,,
	\end{equation}
	and it remains to bound each of these terms.\\
	\paragraph{Step 1: Upper-bounding A.} As in the proof of Theorem~\ref{th:minimax_bound}, we write
	\begin{align}
	\P\big( U_{a^*}(t)&\leq \mu^*-\delta\big) \leq \nonumber\\
	&\underbrace{\P\big( \exists 1\leq n\leq f(\delta),\ \ \kl_+(\hmu_{\astar,n}, \mustar)\geq g(n)/n \big)}_{A_1}+\underbrace{\P\big( \exists f(\delta)\leq n\leq T,\ \ \hmu_{\astar,n}\leq \mustar-\delta \big)}_{A_2}\,,\label{eq:def_A1_A2_asymp}
	\end{align}
	where we use the same function
	\begin{equation*}
	f(\delta)=\frac{2V}{\delta^2}\log\!\!\left(\frac{T \delta^2}{2 K V}\right)\,.
	\end{equation*}
	Thanks to the Inequality~\eqref{eq:inequality_lemma} that we saw in the proof of Theorem~\ref{th:minimax_bound}, we obtain that
	\begin{align}
	A_1 \leq 4e^2 \frac{\log\!\!\Bigg(\frac{T}{ K f(\delta)} \Bigg(1+\log^2\!\!\Bigg(\frac{T}{K f(\delta)}\Bigg)\Bigg)\Bigg)}{\log\!\!\Bigg(\frac{T}{K f(\delta)}\Bigg)} \frac{f(\delta)}{\log\!\!\Bigg(\frac{T}{K f(\delta)}\Bigg)}\frac{K}{T}\nonumber
	\leq \frac{16e^2}{\delta^2}\frac{2 V K}{T}\,.\label{eq:bound_A_1_asymp}
	\end{align}
	Here, we used that for all $x\geq e^{3/2}$, since the condition $\delta^2\geq 22 V K / T$ implies that $f(\delta)K/T\leq e^{-3/2}$,
	
	\begin{equation*}
	\frac{\log\Big(x\big(1+\log^2(x)\big)\Big)}{\log(x)}\leq 2 \qquad\text{ and }\qquad\frac{\log(x)}{\log\big(x/\log(x)\big)}\leq 2\,,
	\end{equation*}
	and that
	\begin{equation*}
	\frac{f(\delta)}{\log\!\!\bigg(\frac{T}{K f(\delta)}\bigg)}=\frac{2 V }{\delta^2}\frac{\log\!\!\bigg(\frac{T\delta^2}{2 V K}\bigg)}{\log\!\!\bigg(\frac{T\delta^2}{2 V K}\frac{1}{\log\big(T\delta^2/(2 V K)\big)}\bigg)}\,.
	\end{equation*}
	Thanks to the maximal inequality recalled in Appendix~\ref{app:technical} as Equation~\eqref{eq:hoeffding}, it holds that
	\begin{align}
	A_2 \leq e^{-\delta^2f(\delta)/(2V)}=\frac{2 V K}{T \delta^2} \label{eq:bound_A_2_asymp}\,.
	\end{align}
	Putting Equations~\eqref{eq:def_A1_A2_asymp} to \eqref{eq:bound_A_2_asymp} together yields:
	\begin{equation}
	\label{eq:bound_A_asymp}
	A\leq (16e^2+1)\frac{2 V K}{\delta^2}\,.
	\end{equation}\\
	\paragraph{Step 2: Upper-bounding B.} Thanks to the definition of $U_a(t)$ it holds  that
	\begin{equation*}
	\big\{\mu^*-\delta < U_{a}(t) \text{ and } A_{t+1}=a\big\} \subseteq \Big\{\kl\big(\hmu_a(t),\mu^*-\delta\big)\leq g\big(N_a(t)\big)/N_a(t) \text{ and } A_{t+1}=a\Big\}   
	\end{equation*}
	Together with the following classical argument for regret analysis in bandit models, this yields:
	\begin{align}
	B&\leq \sum_{t=K}^{T-1} \P\big(\kl\big(\hmu_a(t),\mu^*-\delta\big)\leq g\big(N_a(t)\big)/N_a(t)  \text{ and } A_{t+1}=a\big)\nonumber\\
	&\leq \sum_{n=1}^{T} \P\big(\kl(\hmu_{a,n},\mu^*-\delta)\leq g(n)/n \big)\nonumber\\
	&\leq  \sum_{n=1}^{T} \P\bigg(\kl(\hmu_{a,n},\mu^*-\delta)\leq \log\!\!\Big(T/K\big(1+\log^2(T/K)\big)\Big)/n \bigg)\,, \label{eq:bound_B_n}
	\end{align}
	as it holds $g(n) \leq g(1)$. Now, let $n(\delta)$ be the integer defined as
	\begin{equation*}
	n(\delta)= \left\lceil \frac{\log\Big(T/K\big(1+\log^2(T/K)\big)\Big)}{\kl(\mu_a+\delta,\mu^*-\delta)} \right\rceil\,.
	\end{equation*}
	Then, for $n\geq n(\delta)$, 
	\begin{equation*}
	\log\!\Big(T/K\big(1+\log^2(T/K)\big)\Big)/n\leq \kl(\mu_a+\delta,\mu^*-\delta)\;.
	\end{equation*}
	We cut the sum in \eqref{eq:bound_B_n} at $n(\delta)$, so that
	\begin{align}
	B &\leq n(\delta)-1+\sum_{n=n(\delta)}^{T} \P\big(\kl(\hmu_{a,n},\mu^*-\delta)\leq \kl(\mu_a+\delta,\mu^*-\delta) \big)\nonumber\\
	&\leq \frac{\log\!\Big(T/K\big(1+\log^2(T/K)\big)\Big)}{\kl(\mu_a+\delta,\mu^*-\delta)}+\sum_{n=n(\delta)}^{T}\P\big(\kl(\hmu_{a,n},\mu^*-\delta)\leq \kl(\mu_a+\delta,\mu^*-\delta) \big)\,.\label{eq:bound_B_cut}
	\end{align}
	Recall that by assumption $\delta< (\mustar-\mu_a)/3$, using the inclusion 
	\begin{equation*}
	\big\{\kl(\hmu_{a,n},\mu^*-\delta)\leq \kl(\mu_a+\delta,\mu^*-\delta)\big\} \subseteq \{\hmu_{a,n} \geq \mu_a+\delta\}\,,
	\end{equation*}
	together with Inequality~\eqref{eq:hoeffding}, we obtain that
	\begin{multline*}
	\sum_{n=n(\delta)}^{T}\P\big(\kl(\hmu_{a,n},\mu^*-\delta)\leq \kl(\mu_a+\delta,\mu^*-\delta) \big)\ \leq \sum_{n=n(\delta)}^{T}\P\big(\hmu_{a,n}\geq \mu_a+\delta\big)\\
	\leq \sum_{n=1}^{\infty} e^{- n \delta^2/(2V)}
	= \frac{1}{e^{\delta^2/(2V)}-1}\leq \frac{2 V}{\delta^2}\,,
	\end{multline*}
	and Equation~\eqref{eq:bound_B_cut} yields
	\begin{equation}
	B\leq \frac{\log(T)}{\kl(\mu_a+\delta,\mu^*-\delta)}+ \frac{\log\!\Big(1/K\big(1+\log^2(T/K)\big)\Big)}{\kl(\mu_a+\delta,\mu^*-\delta)}+\frac{2 V}{\delta^2}\,.\label{eq:bound_B_asymp}
	\end{equation}
	\textbf{Conclusion of the proof.} It just remains to plug Inequalities~\eqref{eq:bound_A_asymp} and \eqref{eq:bound_B_asymp} into Equation~\eqref{eq:decomposition_ENa}: 
	\begin{equation*}
	\E \big[ N_a(T)\big ]\leq \frac{\log(T)}{\kl(\mu_a+\delta,\mu^*-\delta)}+ \frac{\log\!\Big(1/K\big(1+\log^2(T/K)\big)\Big)}{\kl(\mu_a+\delta,\mu^*-\delta)}+(16e^2+2)\frac{2 V K}{\delta^2}+1\,,
	\end{equation*}
	and we obtain Equation~\eqref{eq:asymptotic_bound}. Choosing $\delta$ of order $1/\loglog(T)^{1/2}$ yields the asymptotic optimality.

	\section{Conclusion and Perspectives}\label{sec:conclusion}
	We have proved that the kl-UC$\Bpp$ algorithm is both minimax- and asymptotically-optimal for the exponential distribution families described in Section~\ref{sec:settings}.
	So far, this algorithm requires the horizon $T$ as a parameter: to keep the proofs clear and simple, we have deferred to future work the analysis of an anytime variant. 
	We believe, though, that obtaining such an extension should be possible by using the tools developed in~\citet{Degenne:2016:AOA:3045390.3045558}. 
	In addition, we have focused in this paper on asymptotic optimality without trying to derive explicit finite-time bounds: we believe that this would have impaired the clarity and simplicity of the reasoning. But it is certainly a challenging and important objective to design a general strategy that would, in addition to minimax- and asymptotic optimality, would also reach the important notion of \emph{finite-time instance near optimality} of~\citet{lattimore2015optimally}.
	
	From a more technical point of view, it may be possible to suppress the extra  $\log^2$  exploration term in the definition of the confidence bonus~$g$ in Equation~\eqref{eq:def_g}. This is carried out in~\citet{garivier2016onexplore} using some particularities of the Gaussian distributions; using an improved Chernoff bound such as~\citet{talagrand1995missing} may allow considering more general cases. Finally, we defer the consideration of general bounded probability distributions (with non-parametric upper-confidence bounds) to future work.
	
	\acks{This work was partially supported by the CIMI (Centre International de Math\'ematiques et d'{In\-for\-ma\-tique}) Excellence program.
		The authors acknowledge the support of the French Agence Nationale de la Recherche (ANR), under grants ANR-13-BS01-0005 (project SPADRO) and ANR-13-CORD-0020 (project ALICIA).}

	\appendix
	
	\section{Some Technical Lemmas} \label{app:technical}
	
	\begin{lemma}
		For all $\beta>1$ we have 
		\begin{equation}
		\frac{1}{e^{\log(\beta)/\beta} -1}\leq 2 \max\big(\beta, \beta/(\beta-1)\big)\,.
		\label{eq:bound_max}
		\end{equation}
		\label{lem:bound_max}
	\end{lemma}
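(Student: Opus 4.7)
The plan is to reduce the exponential on the left-hand side via the elementary bound $e^x \geq 1 + x$ for $x \geq 0$, and then verify the resulting logarithmic inequality separately in the two regimes where either $\beta$ or $\beta/(\beta-1)$ realizes the maximum on the right-hand side.

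First I would note that for $\beta > 1$ we have $y := \log(\beta)/\beta > 0$, so $e^y - 1 \geq y$ gives
\begin{equation*}
\frac{1}{e^{\log(\beta)/\beta} - 1} \leq \frac{\beta}{\log(\beta)}.
\end{equation*}
It therefore suffices to prove that $\log(\beta) \geq \tfrac{1}{2}\min\bigl(1, \beta-1\bigr)$, since dividing the right-hand side of \eqref{eq:bound_max} by $\beta$ leaves $2\max\bigl(1, 1/(\beta-1)\bigr) = 2/\min\bigl(1,\beta-1\bigr)$.

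Next I would split into the two cases $\beta \geq 2$ and $1 < \beta < 2$, according to which term dominates the maximum. For $\beta \geq 2$ one has $\min(1,\beta-1)=1$, so the required inequality reduces to $\log(\beta) \geq 1/2$, which is immediate from $\beta \geq 2 > e^{1/2}$. For $1 < \beta < 2$ one has $\min(1,\beta-1)=\beta-1$, and the required inequality becomes $2\log(\beta) \geq \beta - 1$. I would handle this by setting $h(\beta) := 2\log(\beta) - (\beta-1)$ and checking that $h(1) = 0$ together with $h'(\beta) = 2/\beta - 1 \geq 0$ for $\beta \in (1,2]$, so $h$ is non-decreasing on $[1,2]$ and in particular non-negative on $(1,2)$.

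There is no real obstacle here: the proof is a two-line case analysis once the convex-inequality reduction $e^x - 1 \geq x$ has been applied. The only conceptual point worth stating is that the single bound $\beta/\log(\beta)$ cannot be controlled uniformly by a constant multiple of $\beta$ (it blows up as $\beta \to 1^+$) nor by a constant multiple of $\beta/(\beta-1)$ (it is too large for $\beta$ large), which is precisely why the maximum of both quantities appears on the right-hand side of \eqref{eq:bound_max}.
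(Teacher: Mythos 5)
Your proof is correct and follows essentially the same route as the paper: both rely on $e^x-1\geq x$ and a case split at $\beta=2$, with the $1<\beta<2$ case ultimately reducing to $\log(\beta)\geq(\beta-1)/2$. You apply the exponential bound once at the outset and verify that inequality by a derivative check, whereas the paper applies the bound inside each case and invokes the monotonicity of $\log(\beta)/(\beta-1)$, but these are cosmetic variations of the same argument.
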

	\begin{proof}
		Inequality~\eqref{eq:bound_max} is equivalent to 
		\begin{align*}
		e^{\log(\beta)/\beta}-1\geq \frac{1}{2\beta} \min(1,\beta-1)\,.
		\end{align*}
		If $\beta \geq 2$, then
		\begin{equation*}
		e^{\log(\beta)/\beta}-1\geq e^{\log(2)/\beta}-1\geq \frac{\log(2)}{\beta}\geq \frac{1}{2\beta}\,.
		\end{equation*}
		Otherwise, if $1<\beta<2$, as the function $\beta \mapsto \log(\beta)/(\beta-1)$ is non-increasing one gets
		\begin{equation*}
		\frac{\beta}{\beta -1} \Big(e^{\log(\beta)/\beta}-1\Big)\geq \frac{\log(\beta)}{\beta-1}\geq \log(2)\geq 1/2\,.
		\end{equation*}
	\end{proof}
	
	\begin{lemma}(Maximal Inequality)
		\label{lem:maximal_inequality}
		Let $N$ and $M$ be two real numbers in $\R^+\times\overline{\mathbb{R^{+}}}$, let $\gamma$ be a real number in $\R^{+*}$, and let $\hmu_n$ be the empirical mean of $n$ random variables i.i.d. according to the distribution $\nu_{{b'}^{-1}(\mu)}$. Then
		\begin{equation}
		\label{eq:maximal_inequality_for_d}
		\P\big(\exists N\leq n\leq M,\ \  \kl_+(\hmu_n,\mu)\geq \gamma\big)\leq e^{-N\gamma}\,.
		\end{equation}
	\end{lemma}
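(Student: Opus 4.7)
The plan is to reduce the event $\{\exists N \le n \le M,\ \kl_+(\hmu_n,\mu) \ge \gamma\}$ to a one-sided lower-deviation event of the form $\{\exists n,\ \hmu_n \le q_\gamma\}$ for an appropriate threshold $q_\gamma \le \mu$, and then to dominate this event by the supremum of a nonnegative exponential martingale, on which Doob's maximal inequality applies. The one-sided truncation built into $\kl_+$ is crucial: the event $\{\hmu_n > \mu\}$ contributes nothing, which is exactly what allows a single exponential change of measure to work throughout.

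For the reduction, I would use that $q \mapsto \kl(q,\mu)$ is continuous on the mean interval and strictly decreasing from its (possibly infinite) limit at $\bar{\mu}^-$ down to $0$ at $\mu$. Hence there is a unique threshold $q_\gamma \in [\bar{\mu}^-, \mu]$ such that $\kl_+(\hmu_n,\mu) \ge \gamma$ iff $\hmu_n \le q_\gamma$ (if $\gamma > \kl(\bar{\mu}^-, \mu)$ the event is empty and the bound is trivial). This reduces the claim to showing $\P(\exists N \le n \le M,\ \hmu_n \le q_\gamma) \le e^{-N\gamma}$.

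For the martingale construction, let $\theta = {b'}^{-1}(\mu)$ and $\theta_\gamma = {b'}^{-1}(q_\gamma)$, and set $\lambda := \theta_\gamma - \theta \le 0$. Since the samples have density $\exp(x\theta - b(\theta))$ with respect to $\rho$, the process
\begin{equation*}
W_n := \exp\!\Big(n\lambda\hmu_n - n\bigl[b(\theta+\lambda) - b(\theta)\bigr]\Big)
\end{equation*}
is a product of i.i.d.\ factors each of mean one, hence a nonnegative martingale with $W_0 = 1$. The exponential-family identity $\kl(q_\gamma,\mu) = b(\theta) - b(\theta_\gamma) + \lambda q_\gamma$ (a direct consequence of the formula for the KL divergence given in Section~\ref{sec:settings}) yields $\lambda q_\gamma - \bigl[b(\theta+\lambda) - b(\theta)\bigr] = \kl(q_\gamma,\mu) = \gamma$; combined with $\lambda \le 0$ this gives $W_n \ge e^{n\gamma} \ge e^{N\gamma}$ on $\{\hmu_n \le q_\gamma\} \cap \{n \ge N\}$.

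Doob's maximal inequality applied to $(W_n)_{1 \le n \le M}$ then gives $\P\bigl(\sup_{1 \le n \le M} W_n \ge e^{N\gamma}\bigr) \le e^{-N\gamma}$, which together with the preceding steps proves the claim. There is no serious obstacle here: the only care needed is to verify the Legendre-type identity $\lambda q_\gamma - \bigl[b(\theta+\lambda) - b(\theta)\bigr] = \kl(q_\gamma,\mu)$ with the correct sign conventions, which is routine for canonical exponential families.
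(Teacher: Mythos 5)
Your proof is correct and follows essentially the same route as the paper: both reduce the event via the monotonicity of $q\mapsto\kl(q,\mu)$ to a one-sided deviation at a threshold below $\mu$, perform the corresponding exponential change of measure with a negative $\lambda$, and apply Doob's (Ville's) maximal inequality to the resulting nonnegative exponential martingale. The only cosmetic difference is that the paper writes the log-moment generating function $\phi_\mu(\lambda)$ where you write $b(\theta+\lambda)-b(\theta)$, and it names your $q_\gamma$ as $z$; the underlying Legendre-duality identity $\lambda q_\gamma-\phi_\mu(\lambda)=\kl(q_\gamma,\mu)=\gamma$ is identical.
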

	
	\begin{proof}
		If $\gamma >\kl(\bar{\mu}^-,\mu)$ or $\hmu_n\geq \mu$ the Inequality \eqref{eq:maximal_inequality_for_d} is trivial. Else, there exist two real numbers $z <\mu $ and $\lambda<0$ such that 
		\begin{equation*}
		\gamma=\kl(z,\mu)=\lambda z- \phi_{\mu}(\lambda)\,,
		\end{equation*}
		where $\phi_{\mu}$ denotes the the log-moment generating function of $\nu_{{b'}^{-1}(\mu)}$. Since on the event $\big\{\exists N\leq n\leq M,\ \ \kl_+(\hmu_n,\mu)\geq \gamma\big\}$ one has at the same time
		\[			\hmu_n \leq \mu\,,\quad 
		\lambda \hmu_n-\phi_{\mu}(\lambda) \geq \lambda z-\phi_{\mu}(\lambda)=\gamma \quad\hbox{ and }\quad
		\lambda n\hmu_n-n\phi_{\mu}(\lambda) \geq  N \gamma\,,\]
		we can write that 
		\begin{align*}
		\P\big(\exists N\leq n\leq M,\ \  \kl_+(\hmu_n,\mu)\geq \gamma\big)&\leq     \P\big(\exists N\leq n\leq M,\ \  \lambda n\hmu_n-n\phi_{\mu}(\lambda) \geq N \gamma \big)\\
		&\leq \exp(-N\gamma)\,,
		\end{align*}
		by Doob's maximal inequality for the exponential martingale $\exp\!\big(\lambda n\hmu_n-n\phi_{\mu}(\lambda)\big)$.
	\end{proof}
	As a simple consequence of this Lemma~\ref{lem:maximal_inequality} and Inequality~\eqref{eq:pinsker}, it holds that:
	\begin{align}
	\label{eq:hoeffding}
	\text{for every $x\leq \mu$,}\qquad      \P(\exists N\leq n\leq M,\ \ \hmu_n \leq x)&\leq e^{- N (x-\mu)^2/(2V)}\,, \\
	\text{for every $x\geq \mu$,}\qquad         \P(\exists N\leq n\leq M,\ \ \hmu_n \geq x)&\leq e^{- N (x-\mu)^2/(2V)}\,.
	\end{align}
	
	\bibliography{biblio-BLB}
\end{document}